\pgfplotsset{compat=newest}
\newtheorem{prop}{Proposition}
\begin{document}

%\begin{frontmatter}
\title{Optimal estimates for short horizon\\ travel time prediction in urban areas}
%\runtitle{Short horizon travel time prediction}
%\thankstext{T1}{Footnote to the title with the ``thankstext'' command.}

%\begin{aug}
%\author{\fnms{Indr\.e} \snm{\v{Z}liobait\.e}\thanksref{m1}\ead[label=e1]{indre.zliobaite@aalto.fi}}
%\and
%\author{\fnms{Mikhail} \snm{Khokhlov}\thanksref{m2}\ead[label=e2]{aeol@yandex-team.ru}}
%%\thankstext{t1}{Some comment}
%%\thankstext{t2}{First supporter of the project}
%\runauthor{I.~\v{Z}liobait\.e and M.~Khokhlov}
%\affiliation{Aalto University, HIIT\thanksmark{m1}, and Yandex\thanksmark{m2}}
%\address{Aalto University, Dept. of Computer Science\\
%Helsinki Institute for Information Technology HIIT\\
%P.O. Box 15400, FI-00076 Aalto, Finland\\
%\printead{e1}}
%%\phantom{E-mail:\ }
%\address{Yandex, Russia\\
%\printead{e2}}
%\end{aug}
\author[1]{Indr\.e \v{Z}liobait\.e}
\author[2]{Mikhail Khokhlov}
\affil[1]{Aalto University and HIIT, e-mail: \texttt{indre.zliobaite@aalto.fi}}
\affil[2]{Yandex,  e-mail: \texttt{aeol@yandex-team.ru}}

\date{}
\maketitle 

\begin{abstract}
%Travel time estimation and prediction from empirical trip data has been studied for over a decade. 
%The majority of research focuses on a highway setting, where most of vehicles follow the same route, and data is collected at fixed locations via inductive loop detectors. 
%, where road networks are so dense and road segments are so short, that it is not possible to collect reliable data via loop detectors. 
Increasing popularity of mobile route planning applications based on GPS technology provides opportunities for collecting traffic data in urban environments. 
One of the main challenges for travel time estimation and prediction in such a setting is how to aggregate data from vehicles that have followed different routes, and predict travel time for other routes of interest. One approach is to predict travel times for route segments, and sum those estimates to obtain a prediction for the whole route. 
We study how to obtain optimal predictions in this scenario. 
It appears that the optimal estimate, minimizing the expected mean absolute error, is a combination of the mean and the median travel times on each segment, 
where the combination function depends on the number of segments in the route of interest. 
We present a methodology for obtaining such predictions, and demonstrate its effectiveness with a case study using travel time data from a district of St. Petersburg collected over one year. 
The proposed methodology can be applied for real-time prediction of expected travel times in an urban road network. %, or used in machine learning tasks for travel time prediction.
\end{abstract}
%\begin{keyword}[class=MSC]
%\kwd[Primary ]{60K35}
%\kwd{60K35}
%\kwd[; secondary ]{60K35}
%\end{keyword}
%
%\begin{keyword}
%\kwd{sample}
%\kwd{\LaTeXe}
%\end{keyword}
%\end{frontmatter}

%\author{Indr\.e \v{Z}liobait\.e and Mikhail Khokhlov% <-this % stops a space
%\IEEEcompsocitemizethanks{\IEEEcompsocthanksitem I. \v{Z}liobait\.e is with Aalto University and University of Helsinki, Finland.\protect\\
%E-mail: indre.zliobaite@aalto.fi.
%\IEEEcompsocthanksitem M. Khokhlov is with Yandex, Russia.
%E-mail: aeol@yandex-team.ru.
%}% <-this % stops an unwanted space
%\thanks{Manuscript received February 26, 2015}}

\section{Introduction}

Traffic congestions are very common in modern urban environments. 
Higher than ever penetration of mobile sensing technologies allows collecting real time traffic data from many users at relatively low costs. 
Such data can be used for providing real time information about traffic conditions.
In addition, data accumulated over time can be used for modeling traffic patterns, 
and making predictions about traffic in the nearest future that can help in travel planning for individuals, and contribute towards mitigating traffic congestions. 

One of the main challenges in urban travel time prediction is that vehicles follow different routes, and traffic conditions are rapidly changing. Suppose we are interested to predict current travel time for a  given route. Probably very few or no drivers has recently traveled exactly the same route, so no direct data is available for making predictions. 
However, it is very likely that a number of vehicles have passed through different segments on the route of interest, and we have data on recent travel times on separate segments. 
We study how to optimally combine predictions made on individual segments into a prediction for the whole route of interest. 
It turns out that a simple sum of predictions for individual segments is not optimal due to the nature of travel time distribution. 
We propose a methodology for estimating travel time on individual segments,  and how to combine these estimates in an optimal way that would minimize the mean absolute error of travel time prediction for any route in the road network. 
We demonstrate the effectiveness of the proposed methodology with a case study using travel time data from a district of St. Petersburg city collected over one year. 

Several commercial services for predicting travel time exist, such as Yandex.Traffic\footnote{\url{http://maps.yandex.com/traffic}}; however, 
we are not aware of any publicly available research, that would systematically investigate, how to optimally aggregate predictions made for road segments.

%One of the main challenges in urban travel time prediction is that vehicles do not follow the same route. 
%If one would like to have a system for travel time prediction in a city, the standard approach would require to have as many predictive models as there are possible routes. 
%Even disregarding computational complexity of such an approach, each individual route would have only very few or none recent historical observations, that would not be enough for building an accurate model for each route. Hence, such an approach is clearly infeasible. 

%An approach that comes into mind is to model travel time for each segment separately, and then aggregate those estimates to obtain a prediction for the whole route of interest. 
%While making a model for each possible route would have required an exponential number of models to the number of road segments, 
%the latter approach would only require maintaining a linear number of models (as many models, as there are segments). 
%Moreover, much more historical data would be available for each model, since data from all vehicles passing a particular segment, while following any route, can be used. 

%In this paper we study theoretically and empirically, what would be the optimal way to make aggregated predictions. 
%This paper presents analysis and recommendations for solving these tasks. 

The paper is organized as follows. 
Section \ref{sec:background} presents background and related work. 
%In Section \ref{sec:criteria} we consider alternative optimization criteria, and analyze theoretical prediction accuracies.
In Section \ref{sec:criteria} we discuss alternative optimization criteria for travel time prediction, and theoretically analyze expected prediction accuracies.
Section \ref{sec:method} presents our methodology for obtaining optimal travel time estimates. 
An experimental case study is presented in Section \ref{sec:experiments}. 
%Section \ref{sec:practical} discusses practical challenges and opportunities for predicting travel time in urban environment.
Section \ref{sec:conclusion} concludes the study.

\section{Background}
\label{sec:background}

This section overviews major research directions in travel time prediction from empirical data.

\subsection{Possible data sources}

Two types of traffic data sources can be distinguished: static and dynamic. 
Static data comes from sensors fixed on roads, such as, inductive loop detectors, or cameras for license plate recognition. 
Several stationary sensors, installed along a road, can estimate how long it takes for a vehicle to travel from one sensor to another. 

Dynamic data comes from sensors (typically GPS) installed in cars. 
Data can be collected by designated probe vehicles driving for the purpose of data collection, service fleet, or private vehicles driving on their own business (a.k.a. \emph{floating car}), equipped with GPS receivers.

%Table \ref{tab:datasources} presents our view towards advantages and limitations of both data collection approaches. 
Stationary data collection provides a complete traffic view, it counts all the vehicles, but it is relatively expensive to deploy, and it is not suitable for tracking vehicles in urban environments, where there are lots of small streets and possible turns. A large number of stationary sensors would be necessary for following vehicles. 
On the other hand, dynamic data collection, using GPS tracking, can track the exact movement of a vehicle no matter how many possible turns there are, but cars that have the necessary equipment can be tracked. 
Dynamic data captures only a sample, not a complete traffic.
%\begin{table}[h]
%\caption{Stationary vs. dynamic data collection.}
%\centering
%\begin{tabular}{lll}
%\hline
%			& Advantages & Limitations \\
%\hline
%Stationary		& exact vehicle count	& fragmented information\\
%(road sensors)	& complete traffic view	& no vehicle tracking\\
%			&					& costly to deploy\\
%\hline
%Dynamic		& route information		& incomplete traffic view\\
% (GPS tracking)	& cheap to deploy		& noisy records\\
%\hline
%\end{tabular}
%\label{tab:datasources}
%\end{table}

We study travel time prediction in urban environments, hence, we focus on dynamically collected data, and methods for working with such data.

\subsection{Related work}

Travel time prediction from empirical trip data has been studied for over a decade.
Table \ref{tab:papers} presents a summary of representative work in this area.

%\begin{table}
\begin{sidewaystable}
\caption{Summary of related work.}
\centering
\begin{tabular}{lllll}
\hline
Study&Road&Data source&Predictive models&Evaluation measures\\
\hline
\citet{Kwon00}&highway&loop detectors & stepwise regression, tree, ANN & MSPE\\
\citet{Ishak02}&highway& loop detectors  & GLM & MAPE\\
\citet{Chien03}&highway& RFT probe vehicles & Kalman filter & MAPE, RMSEP\\
\citet{Rice04}&highway& loop detectors & linear regression, kNN & RMSE\\
\citet{Wu04}&highway& loop detectors & SVM regression & MAPE, RMSEP\\
\citet{Bajwa05}&highway& loop detectors, cameras & pattern matching & correlation, RMSE, hit ratio\\
\citet{Innamaa05}&highway& loop detectors, cameras & ANN & MAE, RMSE, ME,\\
& & & & MRE, hit ratio\\
\citet{Guin06}&highway& cameras & ARIMA & MAE, MAPE, RMSEP\\
\citet{Fei11}&highway& loop detectors & Bayesian & MAE, MAPE, RMSE\\
\citet{Heilmann11}&highway& local detector, toll &  kernel predictor & RMSE\\
\hline
\citet{Fabritiis08}&city&  GPS private cars & pattern matching and ANN & MAPE, RMSE\\
\citet{Vanajakshi08}&city& GPS busses, probe cars & Kalman filter & MAPE\\
\citet{Markovic10}&city& GPS courier vehicles &  kNN and ARIMA & MAPE, ME, RMSE\\
\citet{Westgate13}&city& GPS ambulances &  Bayesian model  & RMSE\\
\citet{Jones13}&highway,& GPS floating car & SVM regression & MAPE\\
					&city& &  & \\
\hline
\end{tabular}
\label{tab:papers}
%\end{table}
\end{sidewaystable}

Studies differ in data sources, in traffic environment, predictive models and evaluation measures used; however, 
the majority of studies focus on highways where data is collected via induction 
loop detectors \citet{Kwon00,Ishak02,Rice04,Wu04,Bajwa05,Innamaa05,Guin06,Fei11}, 
cameras \citet{Bajwa05,Innamaa05,Guin06}, radio-frequency (RF) identification tags \citet{Chien03}, or toll stations \citet{Heilmann11}.
In the highway settings forming the prediction target is straightforward, because most of the vehicles follow the same route, 
and plenty of historical data is available for modeling from the route in question. In these settings there is no need for aggregated predictions. 

The most popular measures for travel time prediction accuracy are the mean absolute percentage error (MAPE), 
which is a normalized version of the mean absolute error, and the root mean square error (RMSE), or its normalized version RMSEP.
Often in research studies accuracy is reported using several alternative measures in order to provide a more comprehensive view of the results.
Accuracy is measured over individual route, or segment. 
We are not aware of any research work investigating optimization criteria or evaluation measures for travel time prediction in a road network. 
%We are not aware of any publicly available research, that would systematically investigate, how to optimally aggregate predictions made for road segments.

The scenario considered by Fei et al \citet{Fei11} is to some extent related to our problem setting. 
The authors aggregate predictions for 66 segments of one highway. 
They use a simple sum of means as the combination rule. 
They do not investigate any alternative rules, and the focus of the paper is not on optimal aggregation methods, as is the focus of our paper. 
We will demonstrate that the mean rule is sub-optimal for combining a small number of segments, but approaches the optimum when the number of segments is large. 
Practically, 66 segments is already a large number, hence, the sum of means may work reasonably well in this case.

Several studies model travel times in urban environments using GPS data \citet{Fabritiis08, Vanajakshi08, Markovic10, Jones13}. 
Vanajakshi et al \citet{Vanajakshi08} predict bus travel times over a test route. The setting is similar to a highway setting, where all the vehicles follow the same route, thus, modeling data from the same route is directly available, and there is no need for aggregated predictions. 
Other three studies \citet{Fabritiis08, Markovic10, Jones13} use floating car data, where vehicles can follow many different routes. 
However, all three studies consider a simplified scenario, where predictions for individual segments are made and evaluated individually, there are no aggregated predictions for different routes. 
In comparison, our study considers a more advanced prediction scenario, where the goal is to optimize the prediction accuracy not over individual segments, but over a set of possible routes. We will demonstrate that the optimization criteria in those two scenarios is not the same. 

A study on ambulance arrival times \citet{Westgate13} uses a Bayesian model for estimating travel times over the road network.  The model parameters are learned all at once for the whole network. Learning such models requires a lot of training samples, which is not feasible in our case, where only a small fraction of all cars in the network provide data, and data distribution is changing over time, which would require different parametrisation at different times of day.

Finally, a different line of research develops traffic simulation models (see e.g. \citet{Treiber13}), which are mainly used for road planning, transportation logistics, car design and manufacturing, but to the best of our knowledge, such models are not used for real-time traffic predictions. 
One of the main limiting factors is that effective predictive models would need to know in advance at least where each vehicle is heading, which is practically infeasible. 

%In summary, a lot of earlier work with loop detectors and highways. In such scenarios direct observation are available for modeling, there is no need for aggregated predictions. 
%Recent studies with GPS data evaluate predictions on each individually,in such a scenario there is no need for aggregated predictions. 

\subsection{Predicting travel time vs. predicting speed}

Majority of related studies aim at predicting travel time, only \citet{Heilmann11} %in Table \ref{tab:papers} 
considers speed prediction.
While speed is easier for humans to interpret (we usually think about traffic conditions in terms of speed, not travel time), time has an important advantage as a target variable for prediction.

One of the main purposes of traffic prediction is to plan optimal routes for vehicles driving in the city. 
Many criteria for route optimality can be considered, such as route length, quantity of fuel used and complexity of driving directions, but the most common by far is the total driving time. 
In a deterministic setting, travel time can always be computed from speed, but the task becomes more complex in a stochastic setting, where the expected time and the expected speed are not related by a strict dependence. 
A model, that is good at predicting expected speed, may be misleading if used for predicting travel time, as the following example illustrates. 

Suppose, a driver can take one of two possible routes ($A$ or $B$) of equal length $12$ km. 
 Due to traffic conditions (fast or slow traffic), two variants of travel time are possible on each route, and they may happen with equal prior probability.  
Traffic conditions on these routes are independent. 
All possible outcomes of the journey are indicated in Table \ref{tab:speeds}.
We can see that the expected speeds on both routes are equal, but the expected travel times differ. 
Hence, generally it is not possible to deduce expected travel time given only expected speed.
Therefore, travel time is chosen as the target variable given the task to plan the fastest route. 

%Nevertheless, driving speed can also be an important factor in route planning, as it affects comfort of the driver. 
%It is not uncommon for people to choose slightly more prolonged route if it promises higher and more stable speed. 
%Predicting the correct expected speed may require building a different model, but, as we will show in the next section, 
%focusing on predicting the median instead of expected values can alleviate this requirement.

\begin{table}[t]
\caption{Example: comparing travel speeds vs. travel times.}
\centering
\begin{tabular}{llccr}
\hline
%\multicolumn{4}{c}{Route $A$}\\
& & Probability & Travel time & Speed\\
\hline
\multirow{2}{*}{Route A}  & Fast traffic & $1/2$ & $12$ min & $60$ km/h\\
& Slow traffic & $1/2$ & $24$ min & $30$ km/h\\
& \multicolumn{2}{r}{Expected values}& $\mathbf{18}$ min & $\mathbf{45}$ km/h\\
\hline
%\hline
%\multicolumn{4}{c}{Route $B$}\\
%& Probability & Travel time & Speed\\
%\hline
\multirow{2}{*}{Route B} & Fast traffic & $1/2$ & $10$ min & $72$ km/h\\
& Slow traffic & $1/2$ & $40$ min & $18$ km/h\\
& \multicolumn{2}{r}{Expected values}& $\mathbf{25}$ min & $\mathbf{45}$ km/h\\
\hline
\end{tabular}
\label{tab:speeds}
\end{table}

%12 km

%{\color{red} On the other hand, if we want to model traffic, we would probably assume some distribution of speed and then compute time from there. 
%A driver does not chose travel time, a driver choses driving speed. 
%Also, speed is easy to bound, time is not that easy. Speed should be above zero. Upper bound is the speed limit, and then the physical limit of a passenger car. Let's say $300$ km/h.
%Are there any traffic flow modeling works that use a similar approach? }

\section{Estimating travel time from historical data}
\label{sec:criteria}

The ultimate purpose of traffic information service is to help users to find an optimal route between two points at a give time. 
In the urban environment conditions of alternative routes are similar, hence, finding an optimal route typically resorts to finding the fastest route. 
A good traffic information service would predict travel times as accurately as possible for as many users as possible. 
Choosing the right optimization criteria in this scenario is not trivial. 
In this section we formally introduce the problem of travel time prediction, 
discuss alternative optimization criteria for travel time prediction, 
and theoretically analyze expected prediction accuracies.

\subsection{Travel time data distribution}
\label{sec:datadistribution}

Firstly, let us consider some characteristics of travel time data. 
The distribution of travel times is positively skewed. 
The travel time over any road segment (of a positive length) is always larger than zero.
Travel time approaches infinity when travel speed approaches zero, i.e. a vehicle drives very slowly.
Most of the probability mass is expected to be concentrated at small positive values.
%most of the probability mass is expected to be concentrated at small positive numbers. 
With such a distribution the median of data is typically smaller than the mean. 
%For example, consider observations $1$, $3$, $4$, $5$, $12$. The median is $4$, the mean is $5$.

The log-normal distribution is a good example of such a distribution. 
If a random variable $x$ is log-normally distributed ($x \sim \mathit{ln} {\cal N}(\mu,\sigma)$), then $y = \log(x)$ has a normal distribution. 
Figure \ref{fig:lognorm} (a) presents example pdfs of the log-normal distribution, 
and (b) presents empirical distributions of travel times in one road segment in St. Petersburg observed in November-December 2012. %of our data %{\color{red} refer to our data section}.  	
We can see that the empirical traffic data distribution resembles log-normal distribution.
%{\color{red} We can do some normality test, if relevant}
\pgfmathdeclarefunction{gauss}{2}{%
  \pgfmathparse{1/(#2*x*sqrt(2*pi))*exp(-((ln(x)-#1)^2)/(2*#2^2))}%
}
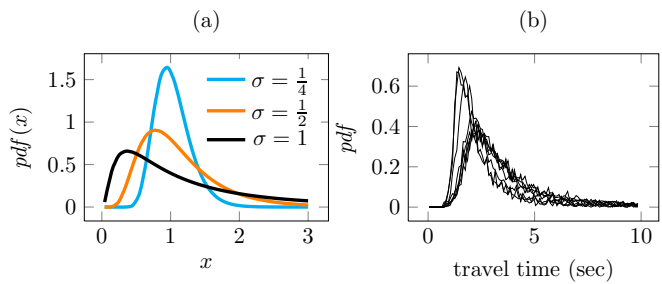
\begin{figure}
\begin{tikzpicture}[scale = 0.85]
\begin{axis}[name=plot1,
every axis plot post/.append style={mark=none,domain=-2:3,samples=50,smooth, line width = 1.5pt}, 
% All plots: from -2:2, 50 samples, smooth, no marks
%  axis x line*=bottom, % no box around the plot, only x and y axis
 % axis y line*=left, % the * suppresses the arrow tips
 % enlargelimits=upper] % extend the axes a bit to the right and top
width=5.4cm, height=4.2cm,
title = (a),
xlabel = $x$,
ylabel = $\mathit{pdf}(x)$,
legend pos= north east,
legend style={draw=none}]
\addplot[cyan] {gauss(0,0.25)};
\addplot[orange] {gauss(0,0.5)};
\addplot[black] {gauss(0,1)};
\legend{$\sigma=\frac{1}{4}$, $\sigma=\frac{1}{2}$,$\sigma=1$} 
\end{axis}
%run_statistics_time_distribution.py
\begin{axis}[name=plot2, at=(plot1.right of south east), anchor=left of south west,
width=5.5cm, height=4.2cm, %xmin = 0, ymin = 0,
ylabel = $\mathit{pdf}$,
title = (b),
xlabel = travel time (sec)] %segment '69258114_0'
\addplot[black, line width = 0.4pt]  table[x=step, y=count0] {times_distribution_statistics_NovDec_69258114_0.dat};
\addplot[black, line width = 0.4pt]  table[x=step, y=count3] {times_distribution_statistics_NovDec_69258114_0.dat};
\addplot[black, line width = 0.4pt]  table[x=step, y=count6] {times_distribution_statistics_NovDec_69258114_0.dat};
\addplot[black, line width = 0.4pt]  table[x=step, y=count9] {times_distribution_statistics_NovDec_69258114_0.dat};
\addplot[black, line width = 0.4pt]  table[x=step, y=count12] {times_distribution_statistics_NovDec_69258114_0.dat};
\addplot[black, line width = 0.4pt]  table[x=step, y=count15] {times_distribution_statistics_NovDec_69258114_0.dat};
\addplot[black, line width = 0.4pt]  table[x=step, y=count18] {times_distribution_statistics_NovDec_69258114_0.dat};
\addplot[black, line width = 0.4pt]  table[x=step, y=count21] {times_distribution_statistics_NovDec_69258114_0.dat};
\end{axis}
\end{tikzpicture}
\caption{(a) Example log-normal probability density functions (pdf), $\mu=0$. (b) Empirical distributions of travel times on one road segment, each line represents the same hour (e.g. 3AM, 6AM,...).}
\label{fig:lognorm}
\end{figure}

\subsection{Problem setting}

%Let $G = (V,E)$ be a road network graph, where $V = \{v_1,v_2,\ldots,v_k\}$ is a set of nodes, 
%and $E = \{\{v_1,v_2\},\ldots\}$ is a set of edges, corresponding to street segments. Nodes can be crossings or less. 
%A path in $G$ is a sequence of nodes $(v_1, v2, . . . , vk)$ such that each consecutive pair $\{v_i
%, v_{i+1}\}$ is an edge in $G$. A trip is a path by one user in consecutive time. 

Suppose we have a fixed road network divided into segments (links). 
For simplicity assume that  each segment represents one-way traffic from crossing to crossing. 
Let ${\cal R} = \{r_1,r_2,\ldots,r_n\}$ be a set of road segments. 
For each segment $r$ we know its length $l_r$, and the neighboring segments to which $r$ is connected. 

In addition, for each segment we know travel times of vehicles that are using a mobile application for navigation
Travel times are extracted from GPS traces.
While dividing a road network into segments, and mapping GPS traces to the segments is a great challenge, 
it is out of the scope of the current paper, which focuses on data analysis for travel time prediction. 
An interested reader is referred to \citet{Lou09}, discussing some of these challenges. 
In our study we assume that the road segments and the travel times are readily available. 

In our setting vehicle information is anonymous, only travel times over a sequence of segments is available, referred to as a trip. 
It is not possible to know anything about the vehicle, or whether several trips originate from the same vehicle. 

Let $D = \{d_1,d_2,\ldots,d_m\}$ be a set of trips observed, and let $\delta_i^{(d)}$ be the index of the $i^{th}$ road segment in trip $d$.
Then a trip $d$ can be described as a sequence of road segments $\left(r_{\delta_1^{(d)}},r_{\delta_2^{(d)}},\ldots,r_{\delta_{k_d}^{(d)}}\right)$.

Let $t^{(d)}_i$ be the travel time on the $i^{th}$ segment in trip $d$, and $k_d$ be the number of segments in trip $d$, then the travel time for the whole trip $d$ is 
\begin{equation}
T^{(d)} = \sum_{i=1}^{k_d} t^{(d)}_i.
\label{eq:T}
\end{equation}

Our task is, given an intended trip as a sequence of segments, to predict the travel time from historical data. 
At least two practical application scenarios related to this task are possible. 
First, someone is interested to know how long it will take to travel from point A to point B. 
Second, a navigation system is selecting the fastest route from point A to point B from several alternative routes. 

%While travel time depends on the personal driving style (somebody may be faster), we assume that we cannot have access and use such information due to privacy reasons. 
%We need to provide one forecast for everybody. We only know that the same person is driving the same route. 

Since no information about vehicles or drivers is accessible, we cannot make personalized predictions. 
We can only make predictions for a particular route at particular time, assuming an average driver. 
As mentioned earlier, casting predictions for all possible routes is impractical, and combinatorially infeasible on any realistically sized road network.
Thus, the best we can do is make predictions for individual road segments, and then aggregate them to get a prediction for the whole trip.
The prediction for trip $d$ would be
\begin{equation}
\hat{T}^{(d)} = \sum_{i=1}^{k_d} \tau(r_{\delta_i^{(d)}}),
\label{eq:That}
\end{equation}
where $\tau(r_j)$ is the predicted travel time for road segment $j$, and $\delta_i^{(d)}$ is the index of the $i^{th}$ road segment in trip $d$.
We will refer to this approach as \emph{additive prediction}.

\subsection{Optimization criteria}

Defining the optimization criteria, and selecting an informative evaluation measure in the additive prediction scenario is not trivial.
From the route planning perspective,  for any new trip $d$ the predicted travel time $\hat{T}^{(d)}$ should be as accurate as possible. 
Many alternative accuracy measures can be considered, as seen in Table~\ref{tab:papers}.

Root Mean Squared Error (RMSE) is a popular loss function for predictive modeling because of its convenient analytical properties. 
RMSE is the square root of the Mean Squared Error (MSE), which is defined as
\begin{equation}
\mathit{MSE} = \frac{1}{m}\sum_{d=1}^{m} (\hat{T}^{(d)} - T^{(d)})^2, 
\end{equation}
where $\hat{T}$ denotes the predicted value $T$ denotes the true observed value, and $m$ is the number of trips, on which evaluation is made. 
The lower MSE/RMSE, the better. MSE/RMSE punish large deviations of predictions from the true values.

Mean Absolute Error (MAE) is an alternative measure, defined as 
\begin{equation}
\mathit{MAE} =  \frac{1}{m}\sum_{d=1}^{m} |\hat{T}^{(d)} - T^{(d)}|, 
\label{eq:mae}
\end{equation}
the lower, the better.

%To illustrate the difference between RMSE and MAE, Table \ref{tab:toy} presents an example. The true values in both cases are the same. 
%Case I has three correct predictions and one extreme deviation. 
%In Case II predictions have less extreme errors, but all the predictions are a bit incorrect. 
%If we judge by RMSE, then predictions in Case II are better, but if we judge by MAE, then predictions in Case I are better. 
%While in control applications Case II may be preferable, e.g. self driving car predicting the next move should not make extreme errors, 
%in travel time prediction, arguably, Case I is preferred. 
While opimizing RMSE would minimize large deviations from the truth, minimizing MAE would give a larger number of correct predictions to more drivers, and, thus, would be more valuable for more customers.

More formally, suppose we have two alternative routes $A$ and $B$ with travel times $T^A$ and $T^B$ independently distributed with different distributions. 
We would like to choose the one that is more likely to be the fastest. 
It can be shown that if $T^A \sim F( {\cal N}(\mu_A,\sigma_A))$ and $T^B \sim F( {\cal N}(\mu_B,\sigma_B))$, where $F$ is some monotone function, 
then we should choose the route with the lowest median travel time (a proof can be found in the Appendix, Proposition~\ref{pro:monotone}).
For predicting the median we need to use MAE criterion, as it will be demonstrated in the next subsection.

Another reason to focus on predicting the median (and hence using MAE) is a compromise between different target variables that it offers. 
We have already mentioned that the expected travel time and expected speed are not related by a strict dependence, 
and therefore it is impossible to build one model that predicts both well. 
On the other hand, the relation is straightforward for the median: the median speed is exactly the length of the route divided by the median travel time. 
Thus, if we manage to predict the median travel time well, we can automatically achieve good results in predicting the median speed.

Hence, we recommend adopting MAE as the main optimization criteria, and focusing on predicting the median travel time.
%\begin{table}[t]
%\centering
%\caption{An example.}
%\begin{tabular}{llcccc|cc}
%& & & & & & RMSE & MAE\\
%\hline
%\multirow{3}{*}{Case I} & True values & 5 & 4 & 7 & 3 & & \\
%& Predictions & 5 & 4 & 1 & 3 & & \\
%& Errors & 0 & 0 & 6 & 0 & 3.0 & 1.5 \\
%\hline
%\multirow{3}{*}{Case II} & True values & 5 & 4 & 7 & 3 & & \\
%& Predictions & 4 & 5 & 2 & 4 & & \\
%& Errors & 1 & -1 & 5 & -1 & 2.6 & 2.0\\
%\hline
%\end{tabular}
%\label{tab:toy}
%\end{table}

%no true pre1 pre2
%1 5 5 4
%2 4 4 5
%3 7 1 2
%4 3 3 4

%\begin{figure}[t]
%\begin{tikzpicture}[scale = 0.85]
%\begin{axis}[name=plot1,
%width=4cm, height=3.5cm, ymin = 0, ymax = 8,
%xtick=\empty,
%ylabel = travel time,
%xlabel = observation,
%title = Case 1,
%legend pos= outer north east,
%legend style={draw=none}]
%\addplot[cyan, dashed, mark = * ,mark size=2,line width = 1.5pt]  table[x=no, y=true] {toy.dat};
%\addplot[mark = o, orange, mark size=4, line width = 1.3pt]  table[x=no, y=pre1] {toy.dat};
%\legend{true,predicted}
%\end{axis}
%\begin{axis}[name=plot2,
% at=(plot1.right of north east), anchor=left of north west,
%width=4cm, height=3.5cm, ymin = 0, ymax = 8,
%xtick=\empty,
%xlabel = observation,
%title = Case 2]
%\addplot[cyan, dashed, mark = * ,mark size=2,line width = 1.5pt]  table[x=no, y=true] {toy.dat};
%\addplot[mark = o, orange, mark size=4, line width = 1.3pt]  table[x=no, y=pre2] {toy.dat};
%\end{axis}
%\end{tikzpicture}
%\caption{Toy example: .}
%\label{fig:toy}
%\end{figure}

\subsection{The best possible predictions}

Next, let us consider, what is the best possible MAE, that can be achieved in urban travel time prediction. 
Recall the restriction that no information about individual vehicles or individual drivers can be used, we need to output one prediction that fits everybody.

From Eq.(\ref{eq:mae}), the lower bound for MAE is \emph{zero}, which happens when $\hat{T}^{(d)} = T^{(d)}$ for all $d$ in $m$, i.e. the predictions are equal to the observed times for all the test trips. Suppose there is an oracle, that knows the future. Could the oracle achieve $\mathit{MAE} = 0$? 

We will analyze three different scenarios: 
\begin{enumerate}
\item road network consists of one segment, 
\item many segments, but all the vehicles follow the same route, 
\item many segments, and vehicles follow different routes. 
\end{enumerate}

%From Eq. (\ref{eq:T})  and (\ref{eq:That}) MAE can be represented as 
%\begin{equation}
%MAE = \frac{1}{m}\sum_{d=1}^{m} |\sum_{i=1}^{k_d} \tau(r_{\delta_i^{(d)}}) - \sum_{i=1}^{k_d} t^{(d)}_i|.
%\label{eq:mae_full}
%\end{equation}

\subsubsection{One segment}

For a start, suppose that the road network has only one segment. 
Then Eq. (\ref{eq:mae}) becomes 
\[
\mathit{MAE} = \frac{1}{m}\sum_{d=1}^{m} |\tau - t^{(d)}|, 
\]
where $\tau$ is the prediction for the segment, and $t^{(d)}$ is the travel time for trip $d$.
Since $\tau$ is the same for all trips, $\tau$ may be equal to $t^{(d)}$ for all $d$, and, in turn, MAE equal to zero only in one case, which is when the travel times for all the trips are the same. In such a case prediction is trivial. 
In any practical case, travel time of different vehicles over one segment varies, and, hence the minimum MAE cannot be zero. 

It can be proven that with one road segment the prediction $\tau = \mathit{median}(t^{(d)})$ minimizes MAE, since median minimizes the sum of absolute deviations (see e.g. \citet{Schwertman90} for a proof). 
%(a proof can be found in the Appendix, Proposition \ref{pro:median}). 

\subsubsection{Many segments, single route}
\label{sec:wk} 

Now suppose the the road network has $k$ segments, but each trip follows the same route. 
As a result, the prediction is the same for all trips ${\cal{T}} = \sum_{i=1}^k \tau(r_i)$.
In this case Eq. (\ref{eq:mae}) becomes 
\[
\mathit{MAE} = %\frac{1}{m}\sum_{d=1}^{m} |\sum_{i=1}^{k} \tau(r_i) - \sum_{i=1}^{k} t^{(d)}_i| = 
\frac{1}{m}\sum_{d=1}^{m} |{\cal{T}} - T^{(d)}|,
\]
where ${\cal{T}}$ % = \sum_{i=1}^{k} \tau(r_i)$ 
is the predicted travel time for the route, and $T^{(d)} $ %= \sum_{i=1}^{k} t^{(d)}_i$ 
is the observed travel time for trip $d$.

Following the same argument as in one segment case, the minimum MAE will be achieved, with the prediction ${\cal{T}} = \mathit{median}(T^{(d)})$. 
Since all the trips follow the same route, it is straightforward to compute the median of $T^{(d)}$, which is the median of the observed travel times for the route.

This scenario applies well to monitoring highways where traffic detectors are installed every few kilometers, and most of the vehicles are continuing through all the highway. 
However, this scenario is not very realistic in urban environment, where each vehicle may be following a different route. 
Hence, for urban environment we need to consider a more complex scenario.
	
\subsection{Many segments, different routes}

%Suppose that there are $n$ segments in the road network, each trip $d$ follows a different route, and consists of $k_d$ segments.  
%For simplicity, let us assume that all the segments are connected to each other, any route is equally likely. There are $n^2 - 1$ possible routes. 

If a road network has many segments, and each vehicle follows a different route, 
for a trip $d$, the optimal prediction would be $\hat{T}^{(d)} = \mathit{median}(T^{(d)})$, as discussed in the previous section. 
The problem now is that we do not have enough observations of trips following the same route, or have no observations of such trips at all. 
Thus, we cannot compute $\mathit{median}(T^{(d)})$ directly. 

We could estimate the medians of each segment separately, and then add them up, but unfortunately, the sum of the medians is not generally equal to the median of the sum. 
Table \ref{tab:ex1} in the Appendix presents a proof by example. 
While the mean of the sum is, in fact, equal to the sum of the means, this does not help much. 
As we have discussed, travel data distribution is skewed, and in such a case the mean is not equal to the median, so we cannot easily reuse the mean for the median either.
Hence, our task resorts to modeling the median of a sum of random variables in the context of travel time estimation.

There have been many research attempts to model the sum of the medians for various data distributions under different assumptions (see e.g. \citet{Hall80}). 
We are not aware of existence of and analytical solution for a small number of variables, which is relevant to travel time prediction. 
Typically trips consist of a small number of segments, as it can be seen in Figure \ref{fig:k}, which presents an empirical distribution of trip lengths on a road network of 106 segments in St. Petersburg, recorded in November-December 2012. There is a peak at 44 segments, accounting for nearly 6\% of the trips, this is due to the main road in the network.
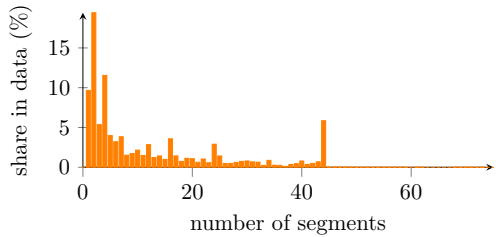
\begin{figure}
\begin{tikzpicture}[scale = 0.85]
\begin{axis}[width=8cm, height=4cm, xmin = 0, ymin = 0, xmax = 75,
axis lines=left,
xlabel = number of segments,
ylabel  =share in data (\%),
ybar, bar width = 1.8pt]
\addplot[fill,orange] table [x=step,y=count] {track_counts_NovDec.dat};
%\addplot[fill,orange] table [x=step,y=count] {track_counts.dat};
%\addplot[domain=0:240] {70.1};
\end{axis}
\end{tikzpicture}
\caption{Empirical distribution of trip lengths according to the number of road segments (the total number of segments in the road network is 106).}
\label{fig:k}
\end{figure}
%It is claimed that models are reasonably accurate for large $k$, but not for small $k$. 

In summary, for an urban road network consisting of many segments, where vehicles follow different routes, we cannot obtain optimal travel time estimates (median over each route) directly.  Hence, we need an approach for estimating the median of the sum that would work well with small number of segments. 
%We need to know what is the best possible MAE not only for benchmarking predictive models, but also for setting up machine learning tasks with the right target variable. 
In the next section we present our methodology for deriving a data driven approximation for travel time estimation in such a scenario.

\section{Methodology for aggregated travel time prediction}
\label{sec:method}

This section describes our methodology for travel time prediction in the urban settings with many road segments, and different routes.
The idea is at first to make estimates of mean and median travel times for each road segment. 
Then aggregate those estimates into a prediction for each route of interest. 
The simplest approach for obtaining the estimates is to take the mean and median over the last observed travel times. 
The remaining challenge is how to combine those estimates. We propose the following solution.

\subsection{Approach to solution}

Suppose a trip consists of $k$ segments. 
The total travel time is a sum of travel times over each segment  $T = \sum_{i=1}^k t_i$. 
We are looking for an estimate $\hat{T}$, which would minimize MAE. 
The solution is based on the following observations. 
\begin{enumerate}
\item When $k=1$ then MAE is minimized with the median over observed travel times $\hat{T} = \mathit{median(t_1)}$  (Sec. \ref{sec:criteria}).
\item When $k \rightarrow \infty$, assuming that travel times over each segment are identically and independently distributed (IID).
, $T$ approaches the normal distribution (Central Limit Theorem), 
thus $\mathit{median(T)} \approx \mathit{mean(T)}$, and $\mathit{mean(\sum_{i=1}^k t_i)} = \sum_{i=1}^k \mathit{mean(t_i)}$.
MAE is minimized with $\hat{T} = \sum_{i=1}^k \mathit{mean(t_i)}$. 
\item Observe that for positively skewed random variable $t$ the expected sum of the medians does not exceed the expected median of the sum, $\sum_{i=1}^k \mathit{median(t_i)} \leq \mathit{median(\sum_{i=1}^k  t_i)}$. 
\item For positively skewed random variable $t$, such as travel time (Section \ref{sec:datadistribution}), median of the sum is smaller than the mean of the sum \citet{Siegel01}, $\mathit{median(\sum_{i=1}^k  t_i)} \leq \mathit{mean(\sum_{i=1}^k  t_i)}$.
\end{enumerate}

Distributions of travel time over different segments may depend on time of day, resulting in different distributions at different times. We consider that for a given time, travel time observations can reasonably be assumed to be independent from each other.

The IID assumption does not exactly hold for the segments in our data, since the segments have different lengths, but, as we will see in the experimental analysis, this assumption  gives a good approximation. Moreover, when a route is sufficiently long, one can always partition it to segments of equal, and sufficiently large, 
length such that travel times over them are identically and independently distributed.

In summary, the median is smaller than the mean, but the median approaches the mean when the number of segments ($k$) becomes large. 
The solution for $k=1$ is the median, for $k \rightarrow \infty$ is the mean, hence, the solution for small positive $k$ should be in between of the median and the mean. 

\subsection{Solution - a combination of mean and median}

Based on these observations, our proposed solution is to model the optimal travel time estimate as a weighted average of the individual means and medians over segments:
\begin{equation}
\hat{T} = (1-w_k) \sum_{i=1}^k \mathit{median}(t_i) + w_k \sum_{i=1}^k \mathit{mean}(t_i),
\label{eq:estimate}
\end{equation}
where $w_k \in [0,1]$ is a weight. If sample sizes are sufficiently large to accurately estimate the median, this approach guarantees an optimal solution, proof can be found in the Appendix, Proposition \ref{pro:main}. But as we will see from the experiments in Section \ref{sec:experiments}, the proposed method shows good performance even for small sample sizes.
%The rational of combining the mean and the median as opposed to using a factor to scale the median is that the sample median gives the lower bound, whereas scaling may lead to underestimating or overestimating. 

When $k=1$, $w_1=0$. When $k \rightarrow \infty$, $w_{\infty} \rightarrow 1$. 
For the rest of $k$ we model $w_k$ as a function of $k$ as follows. 
We randomly generate routes over the actual road network, 
sample travel times for these routes from the observed data, 
compute the median travel times and optimal $w_k$ for each route. 
This way we generate a semi-synthetic dataset, 
on which we can learn $w_k = f(k)$ using some machine learning method, that could capture a non-linear relation from data, for example, Artificial Neural Networks (ANN).

\subsection{Generating data for learning the weight function}
\label{sec:learnwk}

We need to generate a dataset, where the input variable is $k$ and the target variable is $w_k$. 
To generate one data point we:
\begin{enumerate}
\item select $k$ uniformly at random from a range $[1,k_{\mathit{max}}]$, where $k_{\mathit{max}}$ is the maximum length of a route, it depends on the considered road network;
\item randomly generate a track: select one segment at random, select the next segment uniformly at random from those segments connecting to the first one, continue until $k$ segments are selected;
\item for each segment on the generated track randomly pick one observation of travel time from the historical data, sum the selected travel times over $k$ segments, repeat this $h$ times to obtain $h$ trips, 
\item compute the true median over $h$ trips, 
\item let $w_k$ run from $0$ to $1$ (grid search), and select $w_k$, which gives the minimum absolute deviation of the estimate computed as in Eq. (\ref{eq:estimate}) from the true median, computed in the previous step.
\end{enumerate}
This procedure gives one data point. 
We generate $N$ such data points, and use them for modeling $w_k$ as a function of $k$. 

%1) Mean of the sums is equal to sum of the means. 
%2) When $k=1$ optimal is the median.
%3) When $k \rightarrow \infty$, median of the sum approaches mean of the sum. 
%4) For positive, negatively skewed random variable median of the sum < mean of the sum (?? where to prove).
%5) We need to model the approach.

%For large $k$, the median of $T^{(d)}$ approaches the mean of $T^{(d)}$ (Central Limit Theorem)

%For a non-negative random variable median is less than mean \citet{Siegel01}.

\subsection{Learning the weight function}
\label{sec:wk}

Once we have generated a dataset for relating $k$ and $w_k$, 
we can proceed in two ways: we can construct a look-up table, where for each $k$ we can list a corresponding $w_k$, or
we can find a functional form $w_k = f(k)$ using a machine learning method, for example, ANN.
%The simplest functional form for $f(k)$ could be a linear regression, a more flexible non-linear function can be learned, for instance, via Artificial neural networks (ANN), as illustrated next. 
%We will test both approaches. 
%\subsection{Illustration with synthetic data}
As an example, illustrating that a nice functional form of $w_k$ exists, 
Figure \ref{fig:residuals} plots $w_k = f(k)$ learned on synthetic data sampled from log-normal distribution $ln{\cal N}(0,s)$. 
We can see that different data distributions give different functions, but the learned functions give accurate approximations, as tested on out-of-sample data.
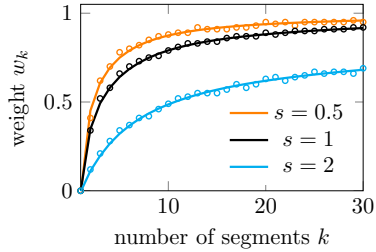
\begin{figure}[t]
\begin{tikzpicture}[scale = 0.85]
\begin{axis}[width=6cm, height=4.5cm, xmin = 1, ymin = 0, xmax=30,ymax=1.05,
ylabel = weight $w_k$,
xlabel = number of segments $k$,
legend pos= south east,
legend style={draw=none}]
\addplot[orange,line width = 1pt]  table[x=k, y=hy05] {lognorm.dat};
\addplot[black,line width = 1pt]  table[x=k, y=hy1] {lognorm.dat};
\addplot[cyan,line width = 1pt]  table[x=k, y=hy2] {lognorm.dat};
\addplot[orange,mark=o, only marks, line width = 0.6pt,mark size=1.2pt]  table[x=k, y=y05] {lognorm.dat};
\addplot[black,mark=o, only marks, line width = 0.6pt,mark size=1.2pt]  table[x=k, y=y1] {lognorm.dat};
\addplot[cyan,mark=o, only marks, line width = 0.6pt,mark size=1.2pt]  table[x=k, y=y2] {lognorm.dat};
\legend{$s=0.5$,$s=1$,$s=2$};
\end{axis}
\end{tikzpicture}
\caption{$w_k = f(k)$ learned with ANN. Solid lines denote estimated functions, circles denote out-of-sample test data.}
\label{fig:residuals}
\end{figure}
While the weight function is different for different distributions (with different standard deviations), we can consider that for a given road network of interest the distribution is fixed, and therefore, one estimated weight function can be learned and applied to that road network for a given time period.   

\subsection{Travel time prediction scenario}

After learning the function\\ $w_k = f(k)$, we recommend the following procedure for travel time prediction.
\begin{enumerate}
\item Estimate the mean and the median travel times for each segment on the road network. 
\item For each trip of interest compute an aggregated prediction using $w_k$, as specified in Eq. (\ref{eq:estimate}).
\end{enumerate} 

Separate $w_k$ can be learned and used, for instance, for different times of the day. 
In case more complex machine learning approaches are used for travel time prediction, including, e.g., time of the day, holiday, weather information as input features, we suggest building two models per segment: for predicting the mean, and predicting the median. 
%At each time of interest output two predictions for each road segment. 
Then, for a trip of interest combine the predictions using Eq. \ref{eq:estimate} with a fixed function $w_k = f(k)$. 

\section{Experimental analysis}
\label{sec:experiments}

We analyze the performance of the proposed approach with a case study from St. Petersburg city. 
The goal is to compare the performance of the new approach to the performance of two baselines: sum of medians and sum of means.
\subsection{Dataset}

%{\color{red} Data collection and preprocessing??}

We use one year (2012) of GPS data, collected via a mobile application Yandex.Navigator. 
The dataset covers an interconnected network of streets in St. Petersburg. 
The network consists of 106 road segments. 
$1\,132\,277$ trips are recorded in the dataset, covering over $14$M records. The dataset is extensive in number of records and the time covered, and, we believe, is as representative of the traffic population in St. Petersburg, as it could be. 
Figure \ref{fig:k} presents an empirical distribution of trips in terms of number of segments. 
%and Figure \ref{fig:cumtrace} presents an empirical distribution of observed route lengths in kilometers. 
%run_optimization_criteria_statistics.py
%\begin{figure}[t]
%\begin{tikzpicture}[scale=0.85]
%\begin{axis}[name=plot1,
%width=5cm, height=4cm, %xmin = 0, ymin = 0,
%ylabel = share in test data (\%),
%xlabel = route length (km),
%legend pos=outer north east,
%legend style={draw=none}]
%\addplot[black,line width = 2pt]  table[x=step, y=count] {track_lengths.dat};
%\end{axis}
%\begin{axis}[name=plot2,
% at=(plot1.right of north east), anchor=left of north west,
%width=5cm, height=4cm, xmin = 0, ymin = 0,
%%ylabel = share in test data (\%),
%xlabel = cum route length (km),
%legend pos=outer north east,
%legend style={draw=none}]
%\addplot[black,line width = 2pt]  table[x=step, y=cum] {track_lengths.dat};
%\end{axis}
%\end{tikzpicture}
%\caption{Distribution of route lengths.}
%\label{fig:cumtrace}
%\end{figure}

\subsection{Experimental protocol}

To minimize possibilities of overfitting, only January-February data is used as training data for generating route data in order to learn the weight function, 
and March-December data is used for testing the predictive performance.

We operate in discrete time steps $\Delta$, over which we collect and summarize input data, and for which we make predictions. 
For example, if $\Delta = 10$min, and now is 14:00 o'clock, we would estimate the mean and the median travel time on each segment from the time interval 13:50-14:00, and use it for making predictions for the time interval 14:00-14:10. 

A trip is included into mean and median estimation after it has finished. That is, if, for instance, a trip starts at 13:48 and ends at 13:55, 
it is included into the estimation interval 13:50-14:00. Predictions are based on the data accumulated before a test trip starts. That is, if a test trip starts 14:07 and ends 14:15, data from the interval 13:50-14:00 is used for making predictions for the whole test trip.  

We compare the performance of the proposed combined approach (COM) with two baselines: the sum of means (SMN) and the sum of medians (SMD). 
Where technically possible, we also show the performance of the true median prediction (MED), which gives theoretically optimal prediction.

The true median (MED) is the theoretically optimal solution, which is only possible if we have many observed trips following the same route. 
It is possible to estimate MED on particular routes, when many vehicles follow the same route, but it is not feasible to estimate MED for most of the routes on the road network. 
The next best we can do is to approximate MED by combining the medians and the means over individual segments (COM), which is our proposed solution, and is expected to give an accuracy close to that of MED. Finally, SMN and SMD are two baseline approaches. 
SMD is expected to perform well on routes consisting of a small number of segments, and SMN is expected to perform well on the routes consisting of a large number of segments. 

For COM we estimate the optimal combination weight $w_k$ by letting $w_k$ run from $0$ to $1$ and selecting the one, which gives the minimum estimation error, as described in Sec. \ref{sec:wk}. 

We standardize the prediction errors by the trip lengths, such that the figures are comparable across different samples, as
%The standardized error is computed as 
\begin{equation}
\mathit{MAE}^\star =  \sum_{d=1}^{m} |\hat{T}^{(d)} - T^{(d)}|/L^{(d)},
\end{equation}
where $L^{(d)}$ is the length of trip $d$ (in km), $L^{(d)} = \sum_{i=1}^{k_d}l_{r_{\delta_i^{(d)}}}$. Note, that the standardized $\mathit{MAE}^\star$ relates to $\mathit{MAE}$ in Eq.~\ref{eq:mae} as $\mathit{MAE}^\star = m\mathit{MAE}/\sum_{d=1}^{m}L^{(d)}$.

For easier visual interpretation and comparison across different aggregation times we plot $\mathit{MAE}$ relative to the performance of the sum of the means baseline (SMN) that is currently being used in practice as the state of the art approach. Relative $\mathit{MAE}$ is $\mathit{MAE}$ of the approach of interest divided by $\mathit{MAE}$ of the baseline. If the relative $\mathit{MAE}$ is smaller than one that means that the approach of interest is performing better than the baseline. 
Since relative $\mathit{MAE}$ incorporates the baseline, from the result we know how good the methods are from the global perspective. 

%We choose the tracks over 500 m - not anymore

\subsection{Results: many segments, one route}

The goal of this experiment is to verify, whether the proposed combined approach provides (COM) a better estimate for travel times than the baselines (SMN and SMD). 
In this experiment we use only a subset of data, consisting of trips over the main road (nearly 6\% of the trips in the dataset), and refer to this subset route as Route44. 
Since this route is popular, we have a number of traces following this particular route, and thus we are able to compute the theoretically optimal prediction, which is the median travel time over the whole route (MED).
Thus, we can also investigate, how close the proposed approach (COM) comes to the theoretically optimal prediction (MED). 

%Since we have a lot of trip observations over the same route, we can measure the median travel time over the whole route. We can also measure mean and median travel times over individual segments. 
While the main route consists of 44 segments (a fixed $k=44$), we can also estimate $w_k$ for any $k\leq 44$ by considering a shorter sub-section of the main road, for instance, when $k=1$ we would consider only the first segment of the main route, and when $k=2$ we would consider the first two segments. 
%The route consists of 44 segments, hence, we can get observations for route lengths ($k$) from 1 to 44. 
For each $k$ we select one sub-segment, which starts at segment \#1 and ends at segment \#$k$.

%\subsubsection{Comparison of different estimates}
%
%First, let us experimentally analyze the behavior of two baseline approximations SMN, SMD with respect to the true median MED. 
%Figure~\ref{fig:route44} plots the estimated travel time for different number of segments. 
%We can see that the true median (MED) is always in between of the sum of means (SMN) and the sum of medians (SMD), which confirms our intuition to model the true median as a combination of SMN and SMD.
%\begin{figure}
%\begin{tikzpicture}[scale = 0.85]
%\begin{axis}[width=5.5cm, height=4.2cm, xmin = 0, xmax = 44, ymin = 0,
%ylabel = travel time (sec),
%xlabel = number of segments $k$,
%legend pos=outer north east,
%legend style ={draw=none}]
%\addplot[cyan,line width = 1.5pt]  table[x=kk, y=SMN] {out_route44_truemed.dat};
%\addplot[black,line width = 1pt]  table[x=kk, y=MED] {out_route44_truemed.dat};
%\addplot[orange,dashed,line width = 2pt]  table[x=kk, y=SMD] {out_route44_truemed.dat};
%\legend{SMN sum means, MED true median, SMD sum medians};
%\end{axis}
%\end{tikzpicture}
%\caption{Estimated travel times on Route44.}
%\label{fig:route44}
%\end{figure}

\subsubsection{Learning the weight function}

Figure \ref{fig:44sen} presents an estimation error as a function of different possible weights $w_k$. 
We can clearly see that there is a global minimum in each case, suggesting that there is a weight for combining the sum of means and and the sum of medians in an optimal way. 
Since the minimum of the solid line overlaps with the dashed line, we can conclude that the combination approach can give the minimum error solution, that can be achieved by knowing the true median. 
%\newcommand{\skw}{3.8cm}
%\begin{figure}
%\begin{tikzpicture}[scale=0.85]
%\begin{axis}[name=plot1,
%width=\skw, height=\skw, xmin = 0, xmax = 1,
% xtick={0,0.5,1},
%ylabel = $\mathit{MAE}$,
%xlabel = $w_k$, title = {$k=2$}]
%\addplot[red,dashed,line width = 0.7pt]  table[x=wk, y=EE2] {out_route44_EEmed.dat};
%\addplot[black,line width = 2pt]  table[x=wk, y=EE2] {out_route44_errors.dat};
%\end{axis}
%\begin{axis}[name=plot2,
% at=(plot1.right of north east), anchor=left of north west,
%width=\skw, height=\skw, xmin = 0, xmax = 1,
% xtick={0,0.5,1},
%xlabel = $w_k$, title = {$k=5$}]
%\addplot[red,dashed,line width = 0.7pt]  table[x=wk, y=EE5] {out_route44_EEmed.dat};
%\addplot[black,line width = 2pt]  table[x=wk, y=EE5] {out_route44_errors.dat};
%\end{axis}
%\begin{axis}[name=plot3,
% at=(plot2.right of north east), anchor=left of north west,
%width=\skw, height=\skw, xmin = 0, xmax = 1,
% xtick={0,0.5,1},
%xlabel = $w_k$, title = {$k=44$}]
%\addplot[red,dashed,line width = 0.7pt]  table[x=wk, y=EE44] {out_route44_EEmed.dat};
%\addplot[black,line width = 2pt]  table[x=wk, y=EE44] {out_route44_errors.dat};
%\end{axis}
%\end{tikzpicture}
%\caption{Estimation error as a function of different weights $w_k$ (solid line), and true median error (dashed line).}
%\label{fig:44sen}
%\end{figure}

\newcommand{\skw}{3.6cm}
\begin{figure}
    \centering
    \begin{minipage}{.61\textwidth}
\begin{tikzpicture}[scale=0.85]
\begin{axis}[name=plot1,
width=\skw, height=\skw, xmin = 0, xmax = 1,
 xtick={0,0.5,1},
ylabel = $\mathit{MAE}$,
xlabel = $w_k$, title = {$k=2$}]
\addplot[red,dashed,line width = 0.7pt]  table[x=wk, y=EE2] {out_route44_EEmed.dat};
\addplot[black,line width = 2pt]  table[x=wk, y=EE2] {out_route44_errors.dat};
\end{axis}
\begin{axis}[name=plot2,
 at=(plot1.right of north east), anchor=left of north west,
width=\skw, height=\skw, xmin = 0, xmax = 1,
 xtick={0,0.5,1},
xlabel = $w_k$, title = {$k=5$}]
\addplot[red,dashed,line width = 0.7pt]  table[x=wk, y=EE5] {out_route44_EEmed.dat};
\addplot[black,line width = 2pt]  table[x=wk, y=EE5] {out_route44_errors.dat};
\end{axis}
\begin{axis}[name=plot3,
 at=(plot2.right of north east), anchor=left of north west,
width=\skw, height=\skw, xmin = 0, xmax = 1,
 xtick={0,0.5,1},
xlabel = $w_k$, title = {$k=44$}]
\addplot[red,dashed,line width = 0.7pt]  table[x=wk, y=EE44] {out_route44_EEmed.dat};
\addplot[black,line width = 2pt]  table[x=wk, y=EE44] {out_route44_errors.dat};
\end{axis}
\end{tikzpicture}
\caption{Estimation error as a function of different weights $w_k$ (solid line), and true median error (dashed).}
\label{fig:44sen}
    \end{minipage}%
    \hfill
    \begin{minipage}{0.35\textwidth}
\begin{tikzpicture}[scale = 0.85]
\begin{axis}[width=5.5cm, height=4cm, xmin=1, xmax=44,ymin = 0, ymax = 1,
ylabel = weight $w_k$, xlabel = number of segments $k$]
%legend pos= south east,
%legend style={draw=none}]
%\addplot[black,only marks]  table[x=kk, y=wk] {out_route44_data_h.dat};
\addplot[orange,line width = 1pt,mark = *, mark size=1pt]  table[x=kk, y=wk] {out_route44_w.dat};
%\legend{observations, fitted function};
\end{axis}
\end{tikzpicture}
\caption{Learned weights $w_k$ for trips of length $k$ on Route44.}
\label{fig:route44_w}
    \end{minipage}
\end{figure}

%Next we analyze the weight function $w_k$ for Route44. 
%Finally, let us estimate the weight function $w_k$ for Route44. 
%The route consists of 44 segments, hence, we can get observations for route lengths ($k$) from 1 to 44. For each $k$ we can construct $45-k$ possible sub-routes. 
%For example, a sub-route consisting of two segments can be obtained combining segment \#1 and segment \#2, or segment \#2 and segment \#3 of Route44.   
%The route consists of 44 segments, hence, we can get observations for route lengths ($k$) from 1 to 44. 
%For each $k$ we select one sub-segment, which starts at segment \#1 and ends at segment \#$k$. 
%Since we have many observed trips over Route44; hence we can compute exact median travel times (MED) for each sub-route, 
%which give a theoretically optimal estimate for the travel time of an unknown vehicle. 

%For each sub-route we can also find the weight $w_k$, which gives the best combination (COM) of the sums of medians (SMD) and sums of means (SMN). We can then fit a non-linear model to obtain a function for computing $w_k$ as a function of $k$, as described in Section \ref{sec:learnwk}. In our experiments we use ANN with 4 hidden layers. 

The resulting weights $w_k$ for Route44, estimated on the training data (Jan-Feb 2012), is presented in Figure \ref{fig:route44_w}. 
The learned weights are not monotonous over increasing $k$, perhaps, due to varying length of segments across the route.  
Next, we will use this $w_k$ for predicting travel time on unseen data.
%\begin{figure}
%\begin{tikzpicture}[scale = 0.85]
%\begin{axis}[width=6.5cm, height=4.2cm, xmin=1, xmax=44,ymin = 0, ymax = 1,
%ylabel = weight $w_k$, xlabel = number of segments $k$]
%%legend pos= south east,
%%legend style={draw=none}]
%%\addplot[black,only marks]  table[x=kk, y=wk] {out_route44_data_h.dat};
%\addplot[orange,line width = 1pt,mark = *, mark size=1pt]  table[x=kk, y=wk] {out_route44_w.dat};
%%\legend{observations, fitted function};
%\end{axis}
%\end{tikzpicture}
%\caption{Learned weights $w_k$ for trips of length $k$ on Route44.}
%\label{fig:route44_w}
%\end{figure}

\subsubsection{Prediction accuracy}

Figure~\ref{fig:route44_err} plots the prediction errors of four alternative approaches. 
The discretization in this experiment is $\Delta = 120$ min, thus, the prediction horizon is 0-120 min. 
The discretization step is chosen in this experiment such that several samples are available in time slot in each segment.

We can see that the proposed approach COM clearly performs better than the baselines SMN and SMD. At small $k$ COM performs slightly better than SMD, and SMN performs much worse, and at larger $k$ COM performs slightly better than SMN, and SMD performs much worse, as expected. 
Hence, COM combines the advantages of the two baselines. Moreover, COM performs as good as the theoretically optimal MED, which is a good news, since MED is rarely feasible to compute in practice, and COM shows to be a good approximation. 
Interestingly, sometimes COM performs slightly better than MED.
That can be explained by small sample sizes from which MED is estimated, in which case the sample median is not as precise as the weighted average obtained by COM using the learned weights $w_k$.
\begin{figure}
\centering
\begin{minipage}{.5\textwidth}
\begin{tikzpicture}[scale = 0.85]
\begin{axis}[width=7cm, height=5.5cm, xmin = 0, xmax = 44, 
ylabel = relative MAE,
xlabel = number of segments $k$,
legend style={font=\tiny,line width=.5pt,mark size=.6pt},
legend pos= south east,
legend style ={draw=none}]
\addplot[cyan,line width = 1.5pt]  table[x=kk, y=SMN] {mean_median_errors_normed.csv};
\addplot[orange,dashed,line width = 2pt]  table[x=kk, y=SMD] {mean_median_errors_normed.csv};
\addplot[black,line width = 1pt]  table[x=kk, y=MED] {mean_median_errors_normed.csv};
\addplot[green,dotted,line width = 2pt]  table[x=kk, y=COM] {mean_median_errors_normed.csv};
\legend{SMN sum means, SMD sum medians, MED true median, COM combined};
\end{axis}
\end{tikzpicture}
\caption{Relative prediction errors on Route44 on test data Mar-Dec 2012.}
\label{fig:route44_err}
\end{minipage}
\hfill
\begin{minipage}{.45\textwidth}
\begin{tikzpicture}[scale = 0.85]
\begin{axis}[width=5.5cm, height=4cm, xmin=1, xmax=70,ymin = 0, ymax = 1,
ylabel = weight $w_k$,
xlabel = number of segments $k$]
\addplot[orange,line width = 1pt,mark = *, mark size=1pt]  table[x=kk, y=wk] {out_w.dat};
\end{axis}
\end{tikzpicture}
\caption{Learned weights $w_k$ on all routes, estimated on Jan-Feb 2012 data.}
\label{fig:w}
\end{minipage}
\end{figure}

%\begin{table}
%\caption{Predictive performance on Route44, Mar-Dec 2012 (MAE in sec.).}
%\label{tab:route44_mae}
%\begin{tabular}{cccc}
%\hline
%MED & SMN & SMD & COM\\
%\hline
%\hline
%\end{tabular}
%\end{table}

\subsection{Results: many segments, different routes}

Next, we analyze how the proposed approach performs on a road network, consisting of many segments and different routes. 

\subsubsection{Learning the weight function}

Figure \ref{fig:w} presents combination weights $w_k$, learned using artificial neural network with four hidden layers. 
The weights monotonically increase with $k$, as expected, since the median of the sum approaches the mean of the sum, as discussed in Sec. \ref{sec:method}.
We see low weights at small $k$, which means that when a trip consists of small number of segments, the sum of the medians SMD dominates. 
When the number of segments per trip reaches about $10$,  the sum of means SMN starts to dominate. 

\subsubsection{Predictive performance}

Figure \ref{fig:sen} plots predictive performance month-by-month for different discretization steps $\Delta$. The prediction horizon corresponds to the discretization step $\Delta$, i.e. if $\Delta = 10$ min, the prediction horizon is 0-10 min.
We see that the proposed combination approach COM consistently outperforms both baselines at different discretization steps (and prediction horizons) over the course of year.

\newcommand\htt{4cm}
\newcommand\wdd{5.5cm}
\newcommand\csl{0.8}
\newcommand\lw{1pt}
\newcommand\ymin{0.97}
\newcommand\ymax{1.05}
\begin{figure*}[t]
\centering
\begin{tikzpicture}
\matrix{
\begin{axis}[scale = \csl,width=\wdd, height=\htt, ymin = \ymin, ymax = \ymax,
legend columns=-1,
legend entries={SMN,SMD,COM},
legend to name=named,
ylabel = relative $\mathit{MAE}$, xticklabels={,,}, xtick={3,...,12},
title = {discretization 10 min}]
\addplot[cyan,line width = 1.5pt]  table[x=mn, y=SMN] {mean_median_errors_10.dat};
\addplot[orange,dashed,line width = 2pt]  table[x=mn, y=SMD] {mean_median_errors_10.dat};
\addplot[green,dotted,line width = 2pt]  table[x=mn, y=COM] {mean_median_errors_10.dat};
\end{axis}
&
\begin{axis}[scale = \csl,width=\wdd, height=\htt, ymin = \ymin, ymax = \ymax,  
xticklabels={,,}, xtick={3,...,12},
title = {discretization 20 min}]
\addplot[cyan,line width = 1.5pt]  table[x=mn, y=SMN] {mean_median_errors_20.dat};
\addplot[orange,dashed,line width = 2pt]  table[x=mn, y=SMD] {mean_median_errors_20.dat};
\addplot[green,dotted,line width = 2pt]  table[x=mn, y=COM] {mean_median_errors_20.dat};
\end{axis}
&
\begin{axis}[scale = \csl,width=\wdd, height=\htt, ymin = \ymin, ymax = \ymax,
xticklabels={,,}, xtick={3,...,12},
title = {discretization 30 min}]
\addplot[cyan,line width = 1.5pt]  table[x=mn, y=SMN] {mean_median_errors_30.dat};
\addplot[orange,dashed,line width = 2pt]  table[x=mn, y=SMD] {mean_median_errors_30.dat};
\addplot[green,dotted,line width = 2pt]  table[x=mn, y=COM] {mean_median_errors_30.dat};
\end{axis}\\
\begin{axis}[scale = \csl,width=\wdd, height=\htt, ymin = \ymin, ymax = \ymax,
ylabel = relative $\mathit{MAE}$, 
xticklabels={Mar,Apr,May,Jun,Jul,Aug,Sep,Oct,Nov,Dec}, x tick label style={rotate=90,anchor=east}, xtick={3,...,12},
title = {discretization 40 min}]
\addplot[scale = \csl,cyan,line width = 1.5pt]  table[x=mn, y=SMN] {mean_median_errors_40.dat};
\addplot[orange,dashed,line width = 2pt]  table[x=mn, y=SMD] {mean_median_errors_40.dat};
\addplot[green,dotted,line width = 2pt]  table[x=mn, y=COM] {mean_median_errors_40.dat};
\end{axis}
&
\begin{axis}[scale = \csl,width=\wdd, height=\htt, ymin = \ymin, ymax = \ymax,
xticklabels={Mar,Apr,May,Jun,Jul,Aug,Sep,Oct,Nov,Dec}, x tick label style={rotate=90,anchor=east}, xtick={3,...,12},
title = {discretization 50 min}]
\addplot[cyan,line width = 1.5pt]  table[x=mn, y=SMN] {mean_median_errors_50.dat};
\addplot[orange,dashed,line width = 2pt]  table[x=mn, y=SMD] {mean_median_errors_50.dat};
\addplot[green,dotted,line width = 2pt]  table[x=mn, y=COM] {mean_median_errors_50.dat};
\end{axis}
&
\begin{axis}[scale = \csl,width=\wdd, height=\htt, ymin = \ymin, ymax = \ymax,
xticklabels={Mar,Apr,May,Jun,Jul,Aug,Sep,Oct,Nov,Dec}, x tick label style={rotate=90,anchor=east}, xtick={3,...,12},
title = {discretization 60 min}]
\addplot[cyan,line width = 1.5pt]  table[x=mn, y=SMN] {mean_median_errors_60.dat};
\addplot[orange,dashed,line width = 2pt]  table[x=mn, y=SMD] {mean_median_errors_60.dat};
\addplot[green,dotted,line width = 2pt]  table[x=mn, y=COM] {mean_median_errors_60.dat};
\end{axis}\\
};
\end{tikzpicture}
\\
{\footnotesize
\ref{named}
}
\caption{Testing errors on the whole route network month-by-month.}
\label{fig:sen}
\end{figure*}
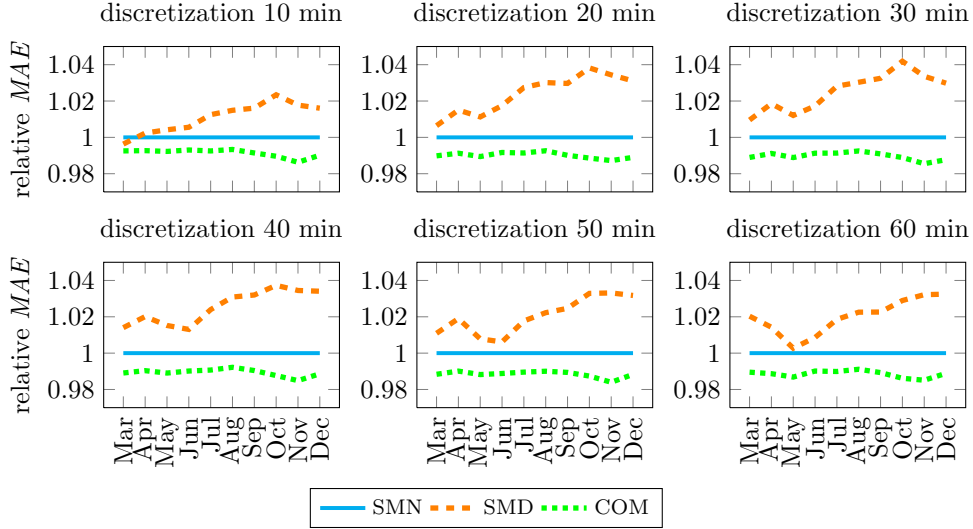

\section{Conclusion}
\label{sec:conclusion}

In this study we analyzed optimization criteria for travel time prediction in urban environments, where vehicles follow many different routes.
%and there are very few or no observations for exactly the same route in the historical data. 
We proposed a methodology for aggregated travel time prediction, which interactively combines  the mean and median estimates of travel times over individual trip segments into a single prediction for the whole trip. Experimental results demonstrated that the proposed approach consistently outperforms the current baselines. 

Based on the results, we recommend using the proposed combination of the sum of means and sum of medians of individual road segments for constructing aggregated predictions.

This study opens several interesting directions for further research. 
We have focused on optimizing the mean absolute error of predictions. 
We have argued that while this quantitative optimization criteria may occasionally permit large errors, it favors the scenario to predict accurately for as many users as possible. 
From the practical perspective, it would be interesting to consider, and possibly integrate, multiple optimization criteria. 
For instance, a designer may want the system to focus on accurately predicting travel time over longer routes, while errors on short routes do not matter that much.  
Another interesting extension would be to consider asymmetric costs of errors, where, for instance, overprediction of travel time is tolerated better than underprediction.

%\balance

\bibliographystyle{abbrvnat}
\bibliography{bib_trafficjams}

\appendix

\section{Proofs}

\begin{prop} 
Sum of the medians is not necessarily equal to the median of the sum.
\end{prop}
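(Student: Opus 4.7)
My plan is to establish the proposition by explicit counterexample, since the claim is a non-equality statement and one concrete instance suffices. I would pick two simple discrete random variables with a positively skewed shape (matching the travel-time motivation of the paper), compute their individual medians, tabulate the distribution of their sum, and compare the median of the sum to the sum of the medians.

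Concretely, I would take two independent copies $X_1, X_2$ of a random variable taking a small value, a moderate value, and a large value, for instance $\Pr(X = 1) = \Pr(X = 2) = \Pr(X = 10) = 1/3$. Then $\mathit{median}(X_1) = \mathit{median}(X_2) = 2$, so $\mathit{median}(X_1) + \mathit{median}(X_2) = 4$. Next I would enumerate the nine equally likely outcomes of $X_1 + X_2$, sort them, and read off the middle value, which turns out to be $11$. The strict inequality $4 \neq 11$ then proves the proposition, and, as a bonus, illustrates in which direction the bias goes for skewed distributions, reinforcing observation~3 in Section~\ref{sec:method}.

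I would present this as a small table analogous to the one implicitly referenced (Table~\ref{tab:ex1}), with one row per joint outcome of $(X_1, X_2)$, a column for $X_1 + X_2$, and a column for the probability $1/9$, so the reader can directly verify the median of the sum.

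The only real subtlety, and the step I would be most careful about, is the precise definition of the median when the distribution is discrete and the c.d.f. jumps over the level $1/2$. For the particular counterexample above this is a non-issue because both individual medians and the median of the sum are uniquely determined at the center of the support, but in a write-up aimed at being bullet-proof I would either state upfront the convention I am using (e.g. the smallest value $m$ with $\Pr(X \leq m) \geq 1/2$) or, alternatively, rework the example with slightly non-uniform probabilities so that the medians are uniquely defined under every standard convention. Beyond that, no obstacle is anticipated: the proposition is an ``exists a counterexample'' claim and the construction above settles it.
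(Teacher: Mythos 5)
Your proposal is correct and takes essentially the same route as the paper: both establish the claim by exhibiting an explicit counterexample (the paper uses a small table of five observed trips over three segments, giving a median-of-sums of $24$ versus a sum-of-medians of $20$; you use two i.i.d.\ three-point variables, giving $11$ versus $4$). Your extra care about the definition of the median for discrete distributions is a reasonable refinement but does not change the argument.
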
 

\begin{proof}
Proof by example. Example data is presented in Table \ref{tab:ex1}. From the table $\mathit{median}(\mathit{sum}(r)) = 24$, but $\mathit{sum}(\mathit{median}(r)) = 5 + 7 + 8 = 20$.
\begin{table}
\caption{Example data (medians of the columns are denoted in bold).}
\centering
\begin{tabular}{l|ccc|c}
	& $r_1$ & $r_2$ & $r_3$ & $\mathit{sum}(r)$\\
\hline	
$t_1$ & 1 & \textbf{7} & \textbf{8} & 16\\
$t_2$ & 3 & 3 & 11 & 17\\
$t_3$ & \textbf{5} & 2 & 17 & \textbf{24}\\
$t_4$ & 10 & 9 & 6 & 25\\
$t_5$ & 20 & 11 & 3 & 34\\
\hline
\end{tabular}
\label{tab:ex1}
\end{table}
\end{proof}

\begin{prop}
Given the weight parameter $w_k \in [0,1]$, the following model exactly estimates the median of the sum for positively skewed identically and independently distributed $t$
\[
\mathit{median}(\sum_{i=1}^k t_i) = (1-w_k) \sum_{i=1}^k \mathit{median}(t_i) + w_k \sum_{i=1}^k \mathit{mean}(t_i),
\]
where $t_t$ is the travel time over segment $i$, $k$ is the number of segments, $w_k \in [0,1]$ is the weight parameter.
\label{pro:main}
\end{prop}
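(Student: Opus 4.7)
The plan is to recast the claim as an intermediate value statement: show that the true median of the sum lies in the closed interval bounded below by the sum of the medians and above by the sum of the means, and then invoke the fact that the right-hand side is an affine function of $w_k$ running from the lower endpoint to the upper endpoint as $w_k$ moves from $0$ to $1$.

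First I would introduce shorthand: let $M = \mathit{median}(\sum_{i=1}^k t_i)$, $A = \sum_{i=1}^k \mathit{median}(t_i)$, and $B = \sum_{i=1}^k \mathit{mean}(t_i)$. Writing the candidate model as $g(w_k) = (1-w_k)A + w_k B$, I observe that $g$ is a continuous (indeed affine) function of $w_k$ with $g(0)=A$ and $g(1)=B$. Hence the claim reduces to verifying that $M$ lies in the interval with endpoints $A$ and $B$; the required weight is then produced either by the intermediate value theorem or directly by solving $g(w_k)=M$ to get $w_k = (M-A)/(B-A)$ when $A \neq B$.

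For the two-sided bound, I would invoke the two inequalities already recorded as Observations 3 and 4 in Section \ref{sec:method}: namely $A \leq M$ and $M \leq B$ for positively skewed $t$. The upper bound is immediate from linearity of expectation, which gives $B = \mathit{mean}(\sum_i t_i)$, combined with the fact that the sum of positively skewed IID variables remains positively skewed and so has mean at least median \citep{Siegel01}. The lower bound $A \leq M$ uses that the sum of IID variables is stochastically more concentrated than rescaling of individuals, so the median of the sum exceeds $k$ times the median of an individual summand when the common distribution is right-skewed; I would cite the same reference and sketch the argument via the characterization of the median as a level set of the CDF of $\sum_i t_i$.

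The remaining case to treat is the degenerate situation $A = B$, which corresponds to a symmetric distribution (where mean equals median); in that case the sandwich inequality collapses to $M = A = B$ and any $w_k \in [0,1]$ trivially satisfies the equation, so the proposition is vacuously true on that edge. I anticipate that the main obstacle is giving a fully rigorous justification of Observation 3, since the sum-of-medians inequality is not a textbook identity and requires explicit use of the positive-skewness assumption together with IID; I would expect to spell this out either by a convexity argument on the CDF of $\sum_i t_i$ near its median, or by citing \citet{Siegel01} and noting that the same skewness that forces $\mathit{mean}(t) \geq \mathit{median}(t)$ propagates to the partial sums by induction on $k$.
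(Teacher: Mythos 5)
Your proposal follows essentially the same route as the paper: both reduce the claim to the sandwich inequality $\sum_{i=1}^k \mathit{median}(t_i) \leq \mathit{median}(\sum_{i=1}^k t_i) \leq \sum_{i=1}^k \mathit{mean}(t_i)$ (the paper's Observations 3 and 4 from Section~\ref{sec:method}) and then solve the affine equation for the weight, obtaining $w_k = (M-A)/(B-A) \in [0,1]$. Your version is marginally more careful --- you treat the degenerate case $A=B$ and explicitly flag that the lower bound (Observation 3) is asserted rather than proven --- but the core argument is identical to the paper's.
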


\begin{proof}
The expression can be rearranged into
\[
w_k = \frac{\mathit{median}(\sum_{i=1}^k t_i) -  \sum_{i=1}^k \mathit{median}(t_i) }{\sum_{i=1}^k \mathit{mean}(t_i) - \sum_{i=1}^k \mathit{median}(t_i) }.
\]
From the definition of mean the following identity holds $\sum_{i=1}^k \mathit{mean}(t_i)  = \mathit{mean}(\sum_{i=1}^k t_i)$. 
From Sec. \ref{sec:method} $\mathit{median}(\sum_{i=1}^k t_i) \leq \sum_{i=1}^k \mathit{mean}(t_i)$, and $\sum_{i=1}^k \mathit{median}(t_i) \leq \mathit{median}(\sum_{i=1}^k t_i)$. After plugging in these inequalities into the expression above we get $0 \leq w_k \leq 1$, which means that  given the right $w_k$ we can approximate the median of the sum as a combination of the sum of the medians and the sum of the means. 
\end{proof}

\begin{prop} 
Let $F$ be monotone function,\\ $T^A \sim F( {\cal N}(\mu_A,\sigma_A^2))$ and $T^B \sim F( {\cal N}(\mu_B,\sigma_B^2))$ be two independent random variables. 
Then $p(T^A > T^B) > \frac{1}{2} \iff \mathit{median}(T^A) > \mathit{median}(T^B)$
\end{prop}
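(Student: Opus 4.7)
The plan is to reduce the claim to a statement about the underlying normal variables via the monotone transformation, and then use symmetry of the normal distribution around its mean.

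First I would assume, without loss of generality, that $F$ is strictly increasing (the strictly decreasing case simply flips both sides of the equivalence simultaneously, so the iff is preserved; constancy on intervals can be handled by taking a generalised inverse, but I would just state the strictly-monotone case). Define $X^A = F^{-1}(T^A) \sim {\cal N}(\mu_A,\sigma_A^2)$ and $X^B = F^{-1}(T^B) \sim {\cal N}(\mu_B,\sigma_B^2)$, which are still independent. Because $F$ is strictly increasing, $T^A > T^B$ if and only if $X^A > X^B$, so $p(T^A > T^B) = p(X^A > X^B)$. Similarly, since $F$ preserves quantiles under strict monotonicity, $\mathit{median}(T^A) = F(\mu_A)$ and $\mathit{median}(T^B) = F(\mu_B)$, so $\mathit{median}(T^A) > \mathit{median}(T^B) \iff \mu_A > \mu_B$.

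Second, I would translate the probability comparison into a one-sided tail for a single normal variable. Since $X^A$ and $X^B$ are independent normals, their difference $Y = X^A - X^B$ is normally distributed with mean $\mu_A - \mu_B$ and variance $\sigma_A^2 + \sigma_B^2 > 0$. By symmetry of the normal distribution about its mean, $p(Y > 0) > \tfrac{1}{2}$ if and only if $\mathit{median}(Y) > 0$, which for the normal distribution is equivalent to $\mu_A - \mu_B > 0$, i.e.\ $\mu_A > \mu_B$.

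Combining the two steps gives the chain
\[
p(T^A > T^B) > \tfrac{1}{2} \iff p(Y > 0) > \tfrac{1}{2} \iff \mu_A > \mu_B \iff \mathit{median}(T^A) > \mathit{median}(T^B),
\]
which is the desired equivalence.

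There is no real technical obstacle here; the only thing to be careful about is the monotonicity assumption. The argument cleanly requires strict monotonicity of $F$ so that $F^{-1}$ is well defined and preserves strict inequalities and medians. If $F$ is only weakly monotone, one should replace $F^{-1}$ with a generalised (left-continuous) inverse and verify that the median identity still holds; this is straightforward but slightly tedious, so I would relegate it to a brief remark rather than expand the proof.
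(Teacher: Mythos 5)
Your proposal is correct and follows essentially the same route as the paper's proof: transform back to the underlying normals via $F^{-1}$, note that the difference of independent normals is normal so the probability comparison reduces to comparing the means, and observe that $F$ preserves medians. The only cosmetic differences are that the paper writes out the error-function formula where you invoke symmetry of the normal about its mean, and that you are somewhat more explicit about requiring strict monotonicity of $F$ (which the paper tacitly assumes in order to invert it).
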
 
\label{pro:monotone}
\begin{proof}
First, let us prove the proposition in a special case, where $F$ is the identity function, i.e., $T^A$ and $T^B$ are normally distributed.

In this case $\mathit{median}(T^A) = \mu_A$, $\mathit{median}(T^B) = \mu_B$. $p(T^A > T^B) = p(T^A - T^B > 0)$. Denote $T = T^A - T^B$.
Since $T^A$ and $T^B$ are independent, $T$ is also normally distributed with mean $\mu = \mu_A - \mu_B$ and standard deviation $\sigma = \sqrt{\sigma_A^2 + \sigma_B^2}$.
$p(T > 0) = \frac{1}{2} + \frac{1}{2} \mathit{erf}(\frac{\mu}{\sigma \sqrt{2}})$. Thus, $p(T > 0) > \frac{1}{2}$ is equivalent to $\mathit{erf}(\frac{\mu}{\sigma \sqrt{2}}) > 0$, which in turn is equivalent to $\mu > 0$, because the error function is odd. Since $\mu = \mu_A - \mu_B$, $p(T^A > T^B) > \frac{1}{2} \iff \mu_A > \mu_B$. 
That proves the proposition in the special case.

Now let us prove the general case. For any monotone function $F$ there exists the inverse $F^{-1}$ which is also monotone. 
Let us denote $t^A = F^{-1}(T^A)$, $t^B = F^{-1}(T^B)$. Since $F^{-1}$ is monotone, $T^A > T^B \iff t^A > t^B$. 
From this fact two conclusions can be drawn.

First, $p(T^A > T^B) = p(t^A > t^B)$.

Second, $p(t^A > F^{-1}(\mathit{median}(T^A))) = p(T^A > \mathit{median}(T^A)) = \frac{1}{2}$, in other words,  $\mathit{median}(t^A) = F^{-1}(\mathit{median}(T^A))$. 
The same holds for $t^B$ and $T^B$.

Now, if $\mathit{median}(T^A) > \mathit{median}(T^B)$, then $\mathit{median}(t^A) > \mathit{median}(t^B)$. $t^A$ and $t^B$ are normally distributed variables, so as proved above, $p(t^A > t^B) > \frac{1}{2}$. But since $p(T^A > T^B) = p(t^A > t^B)$, then $p(T^A > T^B) > \frac{1}{2}$.

Conversely, if $p(T^A > T^B) > \frac{1}{2}$ then $p(t^A > t^B) > \frac{1}{2}$, hence $\mathit{median}(t^A) > \mathit{median}(t^B)$ and $\mathit{median}(T^A) > \mathit{median}(T^B)$.
\end{proof}

%% use section* for acknowledgment
%\ifCLASSOPTIONcompsoc
%  % The Computer Society usually uses the plural form
%  \section*{Acknowledgments}
%\else
%  % regular IEEE prefers the singular form
%  \section*{Acknowledgment}
%\fi

%
%\begin{IEEEbiographynophoto}{Indr\.e \v{Z}liobait\.e} is a research scientist at Aalto University and University of Helsinki, Finland. Prior to that she was a research task leader in the INFER.eu project and at Bournemouth University, UK. I.\v{Z}liobait\.e received her PhD from Vilnius University, Lithuania in 2010. I.\v{Z}liobait\.e's research interests and expertise lie in predictive modeling for temporal and spatial data, and sensory data analysis. 
%\end{IEEEbiographynophoto}
%
%\begin{IEEEbiographynophoto}{Mikhail Khokhlov} is a lead of developer team at Yandex, Russia. He is also teaching algorithms and distributed systems at Moscow Institute of Physics and Technology. M. Khokhlov received his Cand.Sc. degree from Dorodnicyn Computing Center of Russian Academy of Sciences in 2007. His professional interests include time series analysis, machine learning and distributed systems. 
%\end{IEEEbiographynophoto}

%old figures, sensitivity analysis, not very informative
%\input{old_figures}

\end{document}